\newtheorem{theorem}{Theorem}
\newtheorem{corollary}{Corollary}
\newtheorem{assumption}{Assumption}
\newtheorem{definition}{Definition}
\newtheorem{lemma}{Lemma}
\DeclareMathOperator*{\argmax}{arg\,max}
\newcommand{\comment}[1]{}
\newcommand{\com}[1]{{\color{red}#1}}
\newcommand{\com}[1]{}
\begin{document}


\twocolumn[

\aistatstitle{Global Multi-armed Bandits with Hölder Continuity}

\aistatsauthor{ Anonymous Author 1 \And Anonymous Author 2 \And Anonymous Author 3 }

\aistatsaddress{ Unknown Institution 1 \And Unknown Institution 2 \And Unknown Institution 3 } ] 
\begin{abstract}
\comment{ In Multi-armed bandit (MAB) problems, a learner maximizes
its gains by sequentially choosing among a set of arms with unknown
rewards. Choosing an arm reveals information about the arm's reward
at the expense of missing the opportunity of collecting the (possibly
higher) reward of another arm. In the classical MAB problem the arms
are assumed to be independent. Hence, choosing an arm reveals no information
about the rewards of other arms. Thus, identifying the best arm requires
all arms to be selected at least logarithmically many times, thereby
resulting in logarithmic regret. } Standard Multi-Armed Bandit (MAB) problems assume that the arms are independent. However, in many application scenarios, the information obtained by playing an arm provides information about the remainder of the arms. Hence, in such applications, this informativeness can and should be exploited to enable faster convergence to the optimal solution. In this paper, we introduce and formalize the Global MAB (GMAB), in which arms are \textit{globally} informative through a \emph{global parameter}, i.e., choosing an arm reveals information about \emph{all} the arms. We propose a greedy policy
for the GMAB which always selects the arm with the highest estimated
expected reward, and prove that it achieves \textit{bounded parameter-dependent
regret}. Hence, this policy selects
suboptimal arms only finitely many times, and after a finite number
of initial time steps, the \textit{optimal arm} is selected in \textit{all}
of the remaining time steps with probability one. In addition, we
also study how the \emph{informativeness} of the arms about each other's rewards affects
the speed of learning. Specifically, we prove that the parameter-free (worst-case) regret is sublinear in time, and
decreases with the informativeness of the arms. We also prove
a sublinear in time {\em Bayesian risk} bound for the GMAB which reduces to
the well-known Bayesian risk bound for linearly parameterized bandits
when the arms are \emph{fully informative}. GMABs have applications
ranging from drug and treatment discovery to dynamic pricing. 
\end{abstract}

\section{Introduction}

In this paper we study a new class of MAB problems which we name the Global MAB (GMAB). In the GMAB problem, a learner sequentially
selects one of the available $K$ arms with the goal of maximizing
its total expected reward. We assume that expected reward of arm $k$ is
$\mu_{k}(\theta_{*})$, where $\theta_{*}\in\Theta$ is an unknown global parameter. 
For the given global parameter $\theta_{*}$, the reward of each arm follows an i.i.d. process. 
The learner knows the expected reward function $\mu_{k}(\cdot)$ of
all the arms $k$. In this setting an arm $k$ is informative about
another arm $k'$ because the learner can estimate the expected reward
of arm $k'$ by using the estimated reward of arm $k$ and the expected
reward functions $\mu_{k}(\cdot)$ and $\mu_{k'}(\cdot)$. Under mild
assumptions on the expected reward functions, we prove that a greedy
policy which always selects the arm with the highest estimated expected
reward achieves \textit{bounded regret}, which is independent of time.
In other words, suboptimal arms are selected only finitely many times
before converging to the optimal arm. This is a surprising result,
since as shown in \cite{lairobbinsl}, it is not possible to achieve
bounded regret in standard MAB problems because playing arm $k$ is
the only option to learn about its expected reward in these problems.
\comment{ This is the case because the expected arm rewards in standard
MAB problems\cite{lairobbinsl} are independent of each other, in
contrast to the GMAB problem in which the expected rewards of the
arms are related through $\theta_{\textrm{true}}$. } 

While most of the literature on MAB problems assumes independent arms \cite{lairobbinsl,A2002,Kauffman}
and focuses on achieving regret that is logarithmic in time, structured MAB problems exist in which bounded regret has been proven.
One prominent example is provided in \cite{Tsiklis_structured} in
which the expected rewards of the arms are known linear functions
of a global parameter. Under this assumption, \cite{Tsiklis_structured}
proves that the greedy policy achieves bounded regret. Proving finite
regret bounds under this linearity assumption becomes possible since
all the arms are fully informative about each other, i.e., rewards obtained from an arm can be used to estimate the expected reward of the other arms using a linear transformation on the obtained rewards. 

In this paper we consider a more general model in which the expected reward functions are {\em Hölder continuous}, which requires to use a non-linear estimator to exploit the {\em weak informativeness}. Thus, our model includes the case when the expected reward functions are linear functions as a special case. However, while our regret results are a generalization of the
results in \cite{Tsiklis_structured}, our analysis of the regret
is more complicated since the arms are not fully informative as in
the linear case. Thus, deriving regret bounds in our setting requires
us to develop new proof techniques. However, we also show that our
learning algorithm and the regret bounds reduce to the ones in \cite{Tsiklis_structured}
when the arms have linear reward functions. 
In addition to
the bounded regret bound (which depends on the value of the parameter
$\theta_{*}$), we also provide a parameter-free regret
bound and a bound on the Bayesian risk given a distribution $f(\cdot)$
over the parameter space $\Theta$, which matches known upper bound $\Omega(\log T)$ for the linear reward functions \cite{Tsiklis_structured}. Both of these bounds are sublinear in time and depend on the informativeness of the arms with respect
to the other arms, subsequently referred to shortly as informativeness.

Many applications can be formalized as a GMAB, where the reward functions
are {\em Hölder continuous} in the global parameter. Examples include clinical
trials involving similar drugs (e.g., drugs with a similar chemical
composition) or treatments which may have similar effects on the patients
and hence, the outcome of administering one drug/treatment to a patient
will yield information about the outcome of administering a similar
drug/treatment to that patient. Another example is dynamic pricing
\cite{dynamic}. In dynamic pricing, an agent sequentially selects
a price from a set of prices ${\cal P}$ with the objective of maximizing
its revenue over a finite time horizon. At time $t$, the agent first
selects a price $p\in{\cal P}$, and then observes the amount of sales,
which is given as $S_{p,t}(\Lambda)=\bar{F}_{p}(\Lambda)+\epsilon_{t}$, 
where $\bar{F}(.)$ is modulating function and $\epsilon_{t}$ is
the noise term with zero mean. The modulating function is the purchase
probability of an item of price $p$ given the market size $\Lambda$.
Here, the market size is the global parameter, which is unknown and
needs to be learned by setting any price and observing the sales related
to that price. Commonly used modulating functions include the exponential
and logistic functions.

In summary, the main contributions of our paper are: 
\vspace{-0.25in}
\begin{itemize}
\item We formalize a new class of structured MAB problems, which we refer
to as Global MABs. This class of problems represents a generalization
of the linearly parametrized bandits in \cite{Tsiklis_structured}. 
\item For GMABs, we propose a greedy policy that always selects the arm
with the highest estimated expected reward. We prove that the greedy
policy achieves bounded regret (independent of time horizon $T$,
depending on $\theta_{*}$). 
\item In addition to proving that the regret is bounded (which is related
to the asymptotic behavior), we also show how the regret increases
over time by identifying and characterizing three regimes of growth:
first, the regret increases at most sublinearly over time until a
first threshold (that depends on the informativeness) after which
it increases at most logarithmically over time until a second threshold,
before converging to a finite number asymptotically. These thresholds
have the property that they are decreasing in the informativeness. 
\item We prove a sublinear in time worst-case (parameter-free) regret bound.
The rate of increase in time decreases with the informativeness of
the arms, meaning that the regret will increase slower when the informativeness
is high. 
\item Given a distribution over the set of global parameter values, we prove
a Bayesian risk bound that depends on the informativeness. When the
arms are \emph{fully informative}, such as in the case of linearly
parametrized bandits \cite{Tsiklis_structured}, our Bayesian risk
bound and our proposed greedy policy reduce to the well known Bayesian
risk bound and the greedy policy in \cite{Tsiklis_structured}, respectively. 
\end{itemize}
\vspace{-0.2in}
\subsection{Related Work}
\vspace{-0.1in}
Numerous types of MAB problems have been defined and investigated
in the past decade - these include stochastic bandits \cite{lairobbinsl,A2002,Auer_2002a,KL_divergence,Agawal_89},
Bayesian bandits \cite{Kauffman,Thompson,Goyal_Thompson,KauffmanThompson,thompson_priorfree},
contextual bandits \cite{epoch_greedy,Slivkins,thompson_contextual},
combinatorial bandits \cite{combinatorial}, and many other variants.
Instead of comparing our method against all these MAB variants, we
group the existing literature based on the main theme of this paper:
exploiting the informativeness of an arm to learn about the rewards
of other arms. We call a MAB problem {\em non-informative} if the
reward observations of any arm do not reveal any information about
the expected rewards of any other arms. Examples of non-informative
MAB are the stochastic bandits \cite{lairobbinsl,A2002} and the bandits
with local parameters \cite{Goyal_Thompson,Kauffman}. In these problems
the regret grows at least logarithmically in time, since each arm
should be selected at least logarithmically many times to identify
the optimal arm. We call a MAB problem {\em group-informative}
if the reward observations from an arm provide information about the
rewards of a known group of other arms but not all the arms. Examples
of group-informative MAB problems are combinatorial bandits \cite{combinatorial},
contextual bandits \cite{epoch_greedy,Slivkins,thompson_contextual}
and structured bandits \cite{Tsiklis,paramtric_GLM}. In these problems
the regret grows at least logarithmically over time since at least
one suboptimal arm should be selected at least logarithmically many
times to identify groups of arms that are suboptimal. We call a MAB
problem {\em globally-informative} if the reward observations from an arm
provides information about the rewards of \textit{all} the arms. The
proposed GMABs include the linearly-parametrized MABs in \cite{Tsiklis_structured}
as a subclass. Therefore, we prove a bounded regret for a larger class of problems. 

Another related work is \cite{gittins}, in which the optimal arm
selection strategy is derived for the infinite time horizon learning
problem, when the arm rewards are parametrized with known priors,
and the future rewards are discounted. However, in the Gittins' formulation
of the MAB problem, the parameters of the arms are different from
each other, and the discounting allows the learner to efficiently
solve the optimization problem related to arm selection by decoupling
the joint optimization problem into $K$ individual optimization problems
- one for each arm. In contrast, we do not assume known priors, and
the learner in our case does not solve an optimization problem but
rather learns the global parameter through its reward observations.

Another seemingly related learning scenario is the experts setting \cite{Experts}, where after
an arm is chosen, the rewards of all arms are observed and their estimated
rewards is updated. Hence, there is no tradeoff between exploration
and exploitation and finite regret bounds can  be achieved in an expert
system with finite number of arms and stochastic arm rewards. However,
unlike in the expert setting, the GMABs achieve finite regret bounds
while observing \textit{only} the reward of the selected arm. Hence,
the arm reward estimation procedure in GMABs requires forming reward
estimates by collectively considering the observed rewards from all
the arms, which is completely different than in the expert systems,
in which the expected reward of an arm is estimated only by using
the past reward observations from that arm.
\vspace{-0.15in}
\section{Global Multi-Armed Bandits}
\vspace{-0.1in}
\subsection{Problem Formulation}
\vspace{-0.1in}
The set of all arms is denoted by ${\cal K}$ and the number of arms
is $K=|{\cal K}|$, where $|\cdot|$ is the cardinality operator. The
reward obtained by playing an arm $k\in{\cal K}$ at time $t$ is
given by a random variable $X_{k,t}$. We assume that for $t\geq1$
and $k\in{\cal K}$, $X_{k,t}$ is drawn independently from an unknown
distribution $\nu_{k}(\theta_{*})$ with support $[0,1]$.\footnote{The set $[0,1]$ is just a convenient normalization. In general, we only need that distribution has a bounded support.} The learner knows that the expected reward of an arm
$k\in{\cal K}$ is a (Hölder continuous, invertible) function of
the global parameter $\theta_{*}$, which is given by $E_{X_{k,t}\sim\nu_{k}(\theta_{*})}(X_{k,t})=\mu_{k}(\theta_{*})$, where $\mu_{k}: \Theta \rightarrow [0,1]$ and $E[\cdot]$ denotes the expectation. Hence, the true expected reward of arm $k$ is equal to $\mu_{k}(\theta_{*})$.


\begin{assumption} \label{ass:holder}

(i) For each $k\in{\cal K}$, the reward function $\mu_{k}$ is invertible
on $[0,1]$. \\
 (ii) For each $k\in{\cal K}$ and $y,y'\in[0,1]$, there exists $D_{1}>0$
and $0<\gamma_{1}\leq1$ such that $|\mu_{k}^{-1}(y)-\mu_{k}^{-1}(y')|\leq D_{1}|y-y'|^{\gamma_{1}}$, where $\mu_{k}^{-1}$ is the inverse reward function for arm $k$.
\\
 (iii) For each $k\in{\cal K}$ and $\theta,\theta '\in \Theta$ there
exists $D_{2}>0$ and $0<\gamma_{2}\leq1$, such that $|\mu_{k}(\theta)-\mu_{k}(\theta')|\leq D_{2}|\theta-\theta'|^{\gamma_{2}}$.

\end{assumption}

\vspace{-0.05in}

Assumption \ref{ass:holder} ensures that the reward obtained
from an arm can be used to update the estimated expected rewards of
the other arms. The last two conditions are Hölder conditions on the reward and inverse reward functions, which enable us to define the informativeness. It turns out that the invertibility of the reward functions is a crucial assumption that is required to achieve bounded regret. We illustrate this by a counter example when we discuss parameter dependent regret bounds. 
\comment{These assumptions are not necessary for
our algorithm to run, but are required to derive the bounded regret.}

There are many reward functions that satisfy Assumption \ref{ass:holder}.
Examples include: ($i$) exponential functions such as $\mu_{k}(\theta)=a\exp(b\theta)$
for some $a>0$, ($ii$) linear and piecewise linear functions, and
($iii$) sub-linear and super-linear functions in $\theta$ which
are invertible in $\Theta$ such as $\mu_{k}(\theta)=a\theta^{\gamma}$
with $\gamma>0$.

The goal of the learner is to choose a sequence of arms (one at each
time) $\boldsymbol{I}:=(I_{1},\ldots,I_{T})$ up to to time $T$ to
maximize its expected total reward. This corresponds to minimizing
the regret which is the expected total loss due to not always selecting
the optimal arm, i.e., the arm with the highest expected reward. Let
$k^{*}(\theta_{*}):=\argmax_{k\in{\cal K}}\mu_{k}(\theta_{*})$ be the set
of optimal arms and $\mu^{*}(\theta_{*}):=\max_{k\in{\cal K}}\mu_{k}(\theta_{*})$
be the expected reward of the optimal arm for true value of global parameter $\theta_{*}$. The cumulative regret of learning algorithm which selects arm $I_{t}$ until time horizon $T$ is defined as 
\vspace{-0.22in}
\begin{align}
\text{Reg}(\theta_{*},T):=\sum_{t=1}^{T} r_{t}(\theta_{*}),
\end{align}
\vspace{-0.25in}

where $r_{t}(\theta_{*})$ is the one step regret given
by $r_{t}(\theta_{*}):=\mu^{*}(\theta_{*})-\mu_{I_{t}}(\theta_{*})$ for global parameter $\theta_{*}$.
In the following
sections we will derive regret bounds both as a function of $\theta_{*}$
(parameter-dependent regret) and independent from $\theta_{*}$
(worst-case or parameter-free regret).

\vspace{-0.1in}
\subsection{Greedy Policy}
\vspace{-0.1in}
\begin{figure}[h!]
\fbox{%
\begin{minipage}[c]{0.95\columnwidth}%
{\fontsize{9}{8}\selectfont \begin{algorithmic}

\STATE{Input : $\mu_{k}$ for each $k\in{\cal K}$.} \STATE{Initialization:
$w_{k}(0)=0,\hat{\theta}_{k,1}=0,N_{k}(0)=0$ for all $k\in{\cal K}$.}
\WHILE{$t\geq1$}
 \IF{$t=1$} 
 \STATE{Randomly select arm $I_{t}$ from the set ${\cal K}$} 
\ELSE 
\STATE{Select the arm $I_{t}\in\argmax_{k\in{\cal K}}\mu_{k}(\hat{\theta}_{t-1})$}
\ENDIF
\STATE{Observe the reward $X_{I_{t},t}$} 
\STATE{$\hat{X}_{k,t}=\hat{X}_{k,t-1}$ for all $k\in{\cal K}\setminus I_{t}$} 
\STATE{$\hat{X}_{I_t,t} = \frac{\hat{X}_{I_t,t} + X_{I_{t},t}}{N_k(t) +1}$}
\STATE{Update individual estimates for global parameter as $\hat{\theta}_{k,t}=\mu_{k}^{-1}(\hat{X}_{k,t})$
for all $k\in{\cal K}$} 
\STATE{Update counters $N_{I_{t}}(t)=N_{I_{t}}(t-1)+1$} 
\STATE{Update the rest $N_{k}(t)=N_{k}(t-1)$ for all $k\in{\cal K}\setminus I_{t}$}
\STATE{Update weights $w_{k}(t)=\frac{N_{k}(t)}{t}$ for all $k\in{\cal K}$}
\STATE{$\hat{\theta}_{t}=\sum_{k=1}^{K}w_{k}(t)\hat{\theta}_{k,t}$}
\ENDWHILE \end{algorithmic} }%
\end{minipage}} \protect\caption{Pseudocode of the greedy policy.}

\vspace{-0.20in}
 \label{fig:GP} 
\end{figure}
In this section, we propose a greedy policy for the GMAB
problem, which selects the arm with the highest estimated expected
reward at each time $t$. Different from previous works in MABs \cite{A2002,lairobbinsl}
in which the expected reward estimate of an arm only depends on the
reward observations from that arm, the proposed greedy policy uses
a {\em global parameter estimate} $\hat{\theta}_{t}$ for the global
parameter, which is given by  $\hat{\theta}_{t}:=\sum_{k=1}^{K}w_{k}(t)\hat{\theta}_{k,t}$, where $w_{k}(t)$ is the weight of arm $k$ at time $t$ and $\hat{\theta}_{k,t}$ is the estimate of the global parameter based only on the reward observations
from arm $k$ until time $t$. Let ${\cal X}_{k,t}$ denote the set
of rewards obtained from the selections of arm $k$ by time $t$,
i.e., ${\cal X}_{k,t}=(X_{k,t})_{\tau<t\;|I_{\tau}=k}$ and $\hat{X}_{k,t}$
be the sample mean estimate of the rewards obtained from arm $k$
by time $t$, i.e., ${\hat{X}_{k,t}}:=(\sum_{x\in{\cal X}_{k,t}}x)/|{\cal X}_{k,t}|$.
The proposed greedy policy operates as follows for any time $t \geq 2$: ($i$) the arm with
highest expected reward according to the estimated parameter $\hat{\theta}_{t-1}$
is selected, i.e., $I_{t}\in\argmax_{k\in{\cal K}}\mu_{k}(\hat{\theta}_{t-1})$,
($ii$) reward $X_{I_{t},t}$ is obtained and individual reward estimates
$\hat{X}_{k,t}$ are updated for $k\in{\cal K}$, ($iii$) the individual
estimates of each arm $k$ for the global parameters are updated as
$\hat{\theta}_{k,t}=\mu_{k}^{-1}({\hat{X}_{k,t}})$, ($iv$) the weights
of each arm $k$ are updated as $w_{k}(t)=N_{k}(t)/(t)$, where $N_{k}(t)$
is the number of times the arm $k$ is played until time $t$, i.e.,
$N_{k}(t)=|{\cal X}_{k,t}|$. For $t =1$, since there is no global parameter estimate, the greedy policy selects randomly among the set of arms. The pseudocode of the greedy policy is given in Fig. \ref{fig:GP}.
\vspace{-0.1in}
\section{Regret Analysis for the Greedy Policy}
\vspace{-0.1in}
\subsection{Preliminaries}
\vspace{-0.1in}
In this subsection we define the tools that will be used in deriving
the regret bounds. Consider any arm $k\in{\cal K}$. Its \emph{optimality
region} is defined as $\Theta_{k}:=\{\theta\in\Theta\;|k\in k^{*}(\theta)\}$.
\comment{Since the reward functions are known a priori, the optimality
regions of the arms can be calculated a priori.} Clearly, we have
$\bigcup_{k\in{\cal K}}\Theta_{k}=\Theta$. If $\Theta_{k}=\emptyset$
for an arm $k$, this implies that there exists no global parameter
values for which arm $k$ is optimal. Since there exists an arm $k'$ such that $\mu_{k'}(\theta) >\mu_k(\theta)$ for any $\theta \in \Theta$ for an arm with $\Theta_k = \emptyset$, the greedy policy will discard arm $k$ after $t= 1$. Therefore, without loss of generality we assume that $\Theta_{k}\neq\emptyset$
for all $k\in{\cal K}.$ For global parameter $\theta_{*} \in \Theta$, we define the \emph{suboptimality gap }of an arm
$k\in{\cal K} \setminus k^{*}(\theta_{*})$ as $\delta_{k}(\theta_{*}):=\mu^{*}(\theta_{*})-\mu_{k}(\theta_{*})$.
For parameter $\theta_{*}$, the minimum suboptimality gap is defined as
$\delta_{\text{min}}(\theta_{*}):=\min_{k\in{\cal K}\setminus k^{*}(\theta_{*})}\delta_{k}(\theta_{*})$.
\comment{ In our analysis, we will show that when the expected reward
estimate for an arm $k$ is within $\delta_{\min}(\theta_{\textrm{true}})/2$
of its true expected reward for all arms $k\in{\cal K}$, then the
greedy policy will select the optimal arm, even when its global parameter
estimate is not exactly equal to $\theta_{\textrm{true}}$. }

\begin{figure}[h]
\vspace{-0.13in}

 \includegraphics[clip,width=1.0\columnwidth,trim = 50mm 0 30mm 30mm]{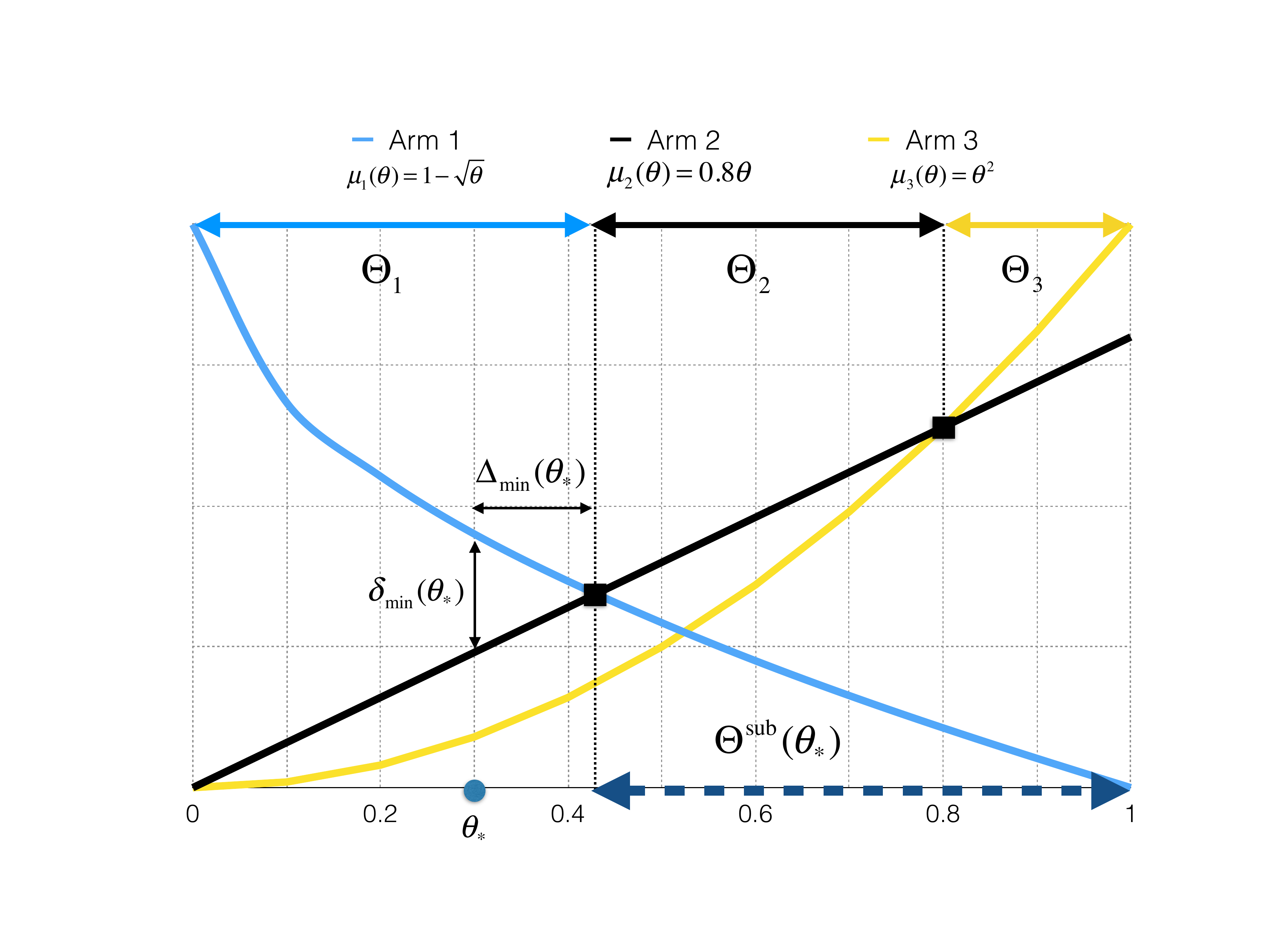}
 \vspace{-0.6in}
 \protect\caption{Illustration of minimum suboptimality gap and suboptimality distance}
\label{fig: illustration} 
\end{figure}

\vspace{-0.1in}

Recall that the expected reward estimate for arm $k$ is equal to
its expected reward corresponding to the global parameter estimate.
We will show that as more arms are selected, the global parameter
estimate will converge to the true value of the global parameter.
However, if $\theta_{*}$ lies close to the boundary of
the optimality region of $k^{*}(\theta_{*})$, the global
parameter estimate may fall outside of the optimality region of $k^{*}(\theta_{*})$
for a large number of time steps, thereby resulting in a large regret.
Let $\Theta^{\text{sub}}(\theta_{*})$ be the suboptimality region for given global parameter $\theta_{*}$, which is defined as the subset of parameter space in which an arm in the set ${\cal K} \setminus k^{*}(\theta_{*})$ is optimal, i.e $\Theta^{\text{sub}}(\theta_{*}) = \cup_{k' \in {\cal K} \setminus k^{*}(\theta_{*})} \Theta_{k'}$. In order to bound the expected number of such deviations  from the
optimality region, for any arm $k$ we define a metric called the
\emph{suboptimality distance, }which is equal to the smallest distance
between the value of the global parameter and suboptimality region.

\begin{definition} \label{def:dist} For a given global parameter
$\theta_{*}$, the \textit{suboptimality distance} is defined as 
\[
\Delta_{\text{min}}(\theta_{*}):=\left\{ \begin{array}{lr}
\inf_{\theta'\in\Theta^{\text{sub}}(\theta_{*})} |\theta_{*}-\theta'| & \text{if } \Theta^{\text{sub}}(\theta_{*}) \neq \emptyset\\
1 & \text{if }\Theta^{\text{sub}}(\theta_{*}) = \emptyset
\end{array}\right.
\]
\end{definition}

\vspace{-0.1in}

From the definition of the suboptimality distance it is evident that
the greedy policy always selects an optimal arm in $k^{*}(\theta_{*})$
when $\hat{\theta}_{t}$ is within $\Delta_{\text{min}}(\theta_{*})$
of the global parameter $\theta_{*}$. An illustration of suboptimality gap and suboptimality distance is given in Fig. \ref{fig: illustration}
for a GMAB problem instance with $3$ arms and reward functions $\mu_{1}(\theta)=1-\sqrt{\theta}$,
$\mu_{2}(\theta)=0.8\theta$ and $\mu_{3}(\theta)=\theta^{2}$.

\comment{
In the following lemma we characterize the minimum amount of change
in the value of the global parameter that will make a suboptimal action
an optimal action. This change is given in terms of $\delta_{min}(\theta)$,
and will be used to bound the regret by bounding the probability of
the deviation of $\hat{\theta}_{t}$ from $\theta$
by an amount greater than $\Delta_{min}(\theta)$.
}
In the following lemma, we show that minimum suboptimality distance is nonzero for any global parameter $\theta_{*}$. This result ensures that we can identify the optimal arm within finite amount of time. 

\begin{lemma} \label{prop:non_zero} Given any $\theta_{*}\in\Theta$,
there exists a constant $\epsilon_{\theta_{*}}=\delta_{\min}(\theta_{*})^{1/\gamma_{2}}/(2D_{2})^{1/\gamma_{2}}$,
where $D_{2}$ and $\gamma_{2}$ are the constants given in Assumption
1 such that $\Delta_{\min}(\theta_{*})\geq\epsilon_{\theta_{*}}.$ In other
words, the minimum suboptimality distance is always positive. 
\end{lemma}

\comment{
\begin{proof} For any suboptimal arm $k\in{\cal K}-k^{*}(\theta)$,
we have $\mu_{k^{*}(\theta)}(\theta)-\mu_{k}(\theta)\geq\delta_{\min}(\theta)>0.$
We also know that $\mu_{k}(\theta')\geq\mu_{k^{*}(\theta)}(\theta')$
for all $\theta'\in\Theta_{k}$. Hence for any $\theta'\in\Theta_{k}$
at least one of the following should hold: (i) $\mu_{k}(\theta')\geq\mu_{k}(\theta)-\delta_{\min}(\theta)/2$,
(ii) $\mu_{k^{*}(\theta)}(\theta')\leq\mu_{k^{*}(\theta)}(\theta)+\delta_{\min}(\theta)/2$.
If both of the below does not hold, then we must have $\mu_{k}(\theta')<\mu_{k^{*}(\theta)}(\theta')$,
which is false. This implies that we either have $\mu_{k}(\theta)-\mu_{k}(\theta')\leq\delta_{\min}(\theta)/2$
or $\mu_{k^{*}(\theta)}(\theta)-\mu_{k^{*}(\theta)}(\theta')\geq - \delta_{\min}(\theta)/2$,
or both. Recall that from Assumption 1 we have $|\theta-\theta'|\geq|\mu_{k}(\theta)-\mu_{k}(\theta')|^{1/\gamma_{2}}/D_{2}^{1/\gamma_{2}}$.
This implies that $|\theta-\theta'|\geq\epsilon_{\theta}$ for all
$\theta'\in\Theta_{k}$. \end{proof}}
For notational brevity, we denote in the remainder of the paper $\Delta_{\text{min}}(\theta_{*})$
and $\delta_{\text{min}}(\theta_{*})$ as $\Delta_{*}$ and $\delta_{*}$,
respectively.

\begin{lemma} \label{lemma:gap} Consider a run of the greedy policy
until time $t$. Then, the following relation between $\hat{\theta}_{t}$
and $\theta_{*}$ holds with probability one: $|\hat{\theta}_{t}-\theta_{*}|\leq\sum_{k=1}^{K}w_{k}(t)D_{1}|\hat{X}_{k,t}-\mu_{k}(\theta_{*})|^{\gamma_{1}}$
\end{lemma}

Lemma \ref{lemma:gap} shows that the gap between the global parameter
estimate and the true value of the global parameter is bounded by
a weighted sum of the gaps between the estimated expected rewards
and the true expected rewards of the arms.

\begin{lemma} \label{lemma:onesteploss} For given global parameter $\theta_*$, the one step regret of the
greedy policy is bounded by $r_{t}(\theta_{*})=\mu^{*}(\theta_{*})-\mu_{I_{t}}(\theta_{*}) \leq 2D_{2}|\theta_{*}-\hat{\theta}_{t}|^{\gamma_{2}}$ with probability one, where $I_{t}$ is the arm selected by the greedy policy at time $t\geq2$. \end{lemma}

\vspace{-0.05in}

Lemma \ref{lemma:onesteploss} ensures that the one step loss decreases
as $\hat{\theta}_{t}$ approaches to $\theta_{*}$. Since
the regret at time $T$ is the sum of the one step losses up to time
$T$, we will bound the regret by bounding the expected distance between
$\hat{\theta}_{t}$ and $\theta_{*}$.

Given a parameter value $\theta_{*}$, let ${\cal G}_{\theta_{*},\hat{\theta}_{t}}^{x}:=\{|\theta_{*}-\hat{\theta}_{t}|>x\}$
be the event that the distance between the global parameter estimate
and its true value exceeds $x$. Similarly, let ${\cal F}_{\theta_{*},\hat{\theta}_{t}}^{k}(x):=\{|\hat{X}_{k,t}-\mu_{k}(\theta_{*})|>x\}$
be the event that the distance between the sample mean reward estimate
of arm $k$ and the true expected reward of arm $k$ exceeds $x$.
The following lemma relates these events.

\begin{lemma} \label{lemma:eventbound} For any $t \geq 2$ and given global
parameter $\theta_{*}$, we have ${\cal G}_{\theta_{*},\hat{\theta}_{t}}^{x}\subseteq \cup_{k=1}^{K}{\cal F}_{\theta_{*},\hat{\theta}_{t}}^{k}((\frac{x}{D_{1}})^{\frac{1}{\gamma_{1}}})$ with probability one.
\end{lemma}

This lemma follows from the decomposition given in Lemma \ref{lemma:gap}.
This lemma will be used to bound the probability of event ${\cal G}_{\theta_*,\hat{\theta}_{t}}^{x}$
in terms of probabilities of the events ${\cal F}_{\theta_*,\hat{\theta}_{t}}^{k}((\frac{x}{D_{1}})^{\frac{1}{\gamma_{1}}})$.
\vspace{-0.1in}
\subsection{Parameter-Free Regret Analysis}
\vspace{-0.1in}
The following theorem bounds the expected regret of the greedy policy
in one step.

\begin{theorem} \label{thm:onestepregret} Under Assumption \ref{ass:holder}, for given global parameter $\theta_{*}$, the expected one-step regret of the greedy policy is bounded by $E[r_{t}(\theta_{*})]=O(t^{-\frac{\gamma_{1}\gamma_{2}}{2}})$. 
\end{theorem}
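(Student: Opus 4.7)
The plan is to chain Lemmas~2 and~3 to reduce the expected one-step regret to a weighted sum of per-arm estimation errors, and then to bound that sum via a martingale argument.

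First I would apply Lemma~3 to obtain $E[r_t(\theta_*)] \leq 2 D_2\, E[|\theta_* - \hat{\theta}_t|^{\gamma_2}]$. Combining Lemma~2 with the subadditivity of $x \mapsto x^{\gamma_2}$ on $[0,\infty)$ (which holds since $\gamma_2 \in (0,1]$ makes $x^{\gamma_2}$ concave with $0 \mapsto 0$) yields
\[
|\theta_* - \hat{\theta}_t|^{\gamma_2} \leq D_1^{\gamma_2} \sum_{k=1}^K w_k(t)^{\gamma_2}\, |\hat{X}_{k,t} - \mu_k(\theta_*)|^{\gamma_1 \gamma_2},
\]
so it suffices to show that each summand has expectation $O(t^{-\gamma_1\gamma_2/2})$.

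The core is to control each term via the martingale
\[
M_k(t) := \sum_{\tau=1}^t (X_{k,\tau} - \mu_k(\theta_*))\, \mathbf{1}\{I_\tau = k\},
\]
which is a martingale with respect to the natural filtration because $I_\tau$ is $\mathcal{F}_{\tau-1}$-measurable while $X_{k,\tau}$ is drawn independently with mean $\mu_k(\theta_*)$. The key identity is $|M_k(t)| = N_k(t)\,|\hat{X}_{k,t} - \mu_k(\theta_*)|$ (with both sides zero when $N_k(t) = 0$), which converts the adaptive sample-mean error into a quantity whose second moment is tractable: orthogonality of martingale increments together with the variance bound $1/4$ for $[0,1]$-valued rewards yields $E[M_k(t)^2] \leq E[N_k(t)]/4 \leq t/4$.

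Substituting $|\hat{X}_{k,t} - \mu_k(\theta_*)| = |M_k(t)|/N_k(t)$, the $k$-th summand becomes $t^{-\gamma_2}\, N_k(t)^{\gamma_2(1-\gamma_1)}\, |M_k(t)|^{\gamma_1\gamma_2}$; since the exponent $\gamma_2(1-\gamma_1)$ is nonnegative, the bound $N_k(t) \leq t$ collapses this to $t^{-\gamma_1\gamma_2}\,|M_k(t)|^{\gamma_1\gamma_2}$. Taking expectations and invoking Jensen's inequality on the concave map $x \mapsto x^{\gamma_1\gamma_2/2}$ (valid since $\gamma_1\gamma_2 \leq 1$) gives $E[|M_k(t)|^{\gamma_1\gamma_2}] \leq (t/4)^{\gamma_1\gamma_2/2}$. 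Summing over the $K$ arms and reassembling yields $E[r_t(\theta_*)] = O(t^{-\gamma_1\gamma_2/2})$.

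The main obstacle is that $N_k(t)$ is random and adaptively controlled by the greedy rule, so $\hat{X}_{k,t}$ is not the average of a fixed number of i.i.d.\ samples and off-the-shelf concentration does not apply. Introducing $M_k(t)$ circumvents this, since it is a martingale for any selection rule. It is equally important that the rearrangement leaves $\gamma_2(1-\gamma_1) \geq 0$ as the surviving exponent on $N_k(t)$; this is precisely what lets the crude bound $N_k(t) \leq t$ still deliver the claimed $t^{-\gamma_1\gamma_2/2}$ rate.
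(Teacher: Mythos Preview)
Your argument is correct and takes a genuinely different route from the paper. The paper first applies Jensen's inequality to pull the exponent $\gamma_2$ outside the expectation, then conditions on the weight vector $\boldsymbol{w}(t)$ and invokes the Chernoff--Hoeffding tail integral to obtain $E[\,|\hat X_{k,t}-\mu_k(\theta_*)|\mid \boldsymbol{w}(t)\,]\le \sqrt{\pi/(2N_k(t))}$; the resulting factor $\sum_k w_k(t)^{1-\gamma_1/2}$ is then bounded by $K^{\gamma_1/2}$ via concavity. You instead work pointwise: subadditivity of $x\mapsto x^{\gamma_2}$ replaces the first Jensen step, and the martingale $M_k(t)$ with $E[M_k(t)^2]\le t/4$ replaces the Hoeffding tail integral, after which the crude bound $N_k(t)\le t$ suffices because the surviving exponent $\gamma_2(1-\gamma_1)$ is nonnegative.

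What each approach buys: the paper's route yields a sharper dependence on the number of arms ($K^{\gamma_1\gamma_2/2}$ rather than your $K$), which is what feeds the $K$-factor in Theorem~2. Your route, on the other hand, is more careful about the adaptive-sampling issue you flag: the paper applies Hoeffding \emph{after} conditioning on $\boldsymbol{w}(t)$ as if the $N_k(t)$ samples entering $\hat X_{k,t}$ were still i.i.d., a step that is stated without justification and is a well-known subtlety in bandit analysis. Your martingale argument sidesteps this entirely, since $M_k(t)$ is a martingale for \emph{any} predictable selection rule. Both proofs deliver the same $t^{-\gamma_1\gamma_2/2}$ rate; yours trades a constant factor in $K$ for rigor at the conditioning step.
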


Theorem \ref{thm:onestepregret} does not only prove that the expected
loss incurred in one step by the greedy policy goes to zero with time but also
bounds the expected loss that will be incurred at any time step $t$.\footnote{The asymptotic notation is only used for a succinct representation, to hide the constants and highlight the time dependence. This bound holds not just asymptotically but for any finite $t$.}
This is a \emph{worst-case} bound in the sense that it does not depend
on $\theta_{*}$. Using this result, we derive the parameter-free
regret bound in the next theorem.

\begin{theorem} \label{thm:par_indep} Under Assumption \ref{ass:holder},
for given global parameter $\theta$, the parameter-free
regret of the greedy policy is bounded by $E[\text{Reg}(\theta_{*},T)] = O(K^{\frac{\gamma_{1}\gamma_{2}}{2}}T^{1-\frac{\gamma_{1}\gamma_{2}}{2}})$.
\end{theorem}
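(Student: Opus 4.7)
The plan is to obtain the parameter-free bound by summing the per-step regret of Theorem \ref{thm:onestepregret} over $t = 1, \ldots, T$. Concretely, linearity of expectation gives $E[\text{Reg}(\theta_*, T)] = \sum_{t=1}^T E[r_t(\theta_*)]$, and Theorem \ref{thm:onestepregret} supplies $E[r_t(\theta_*)] \leq C\, t^{-\gamma_1 \gamma_2 / 2}$ for a constant $C$ that does not depend on $\theta_*$, so the whole argument reduces to bounding the tail sum $\sum_t t^{-\gamma_1 \gamma_2/2}$ and tracking how $C$ scales with $K$.

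For the time dependence, set $\alpha := \gamma_1 \gamma_2 / 2$, which lies in $(0, 1/2]$ since $\gamma_1, \gamma_2 \in (0,1]$. In particular $\alpha < 1$, so a standard integral comparison
\[
\sum_{t=1}^T t^{-\alpha} \leq 1 + \int_1^T x^{-\alpha}\, dx = 1 + \frac{T^{1-\alpha} - 1}{1 - \alpha}
\]
yields a bound of order $T^{1-\alpha} = T^{1 - \gamma_1 \gamma_2 / 2}$, which is exactly the time factor claimed by the theorem.

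For the $K$-dependence of $C$, I would unpack the proof of Theorem \ref{thm:onestepregret}. Lemma \ref{lemma:onesteploss} bounds $r_t(\theta_*)$ by $2D_2 |\hat{\theta}_t - \theta_*|^{\gamma_2}$, and Lemma \ref{lemma:eventbound} shows $\{|\hat{\theta}_t - \theta_*| > x\} \subseteq \cup_{k=1}^K \{|\hat{X}_{k,t} - \mu_k(\theta_*)| > (x/D_1)^{1/\gamma_1}\}$. Applying a union bound over the $K$ arms, invoking Hoeffding-type concentration on each per-arm sample mean (using that $\sum_k N_k(t) = t$ forces at least one arm to satisfy $N_k(t) \geq t/K$), and then integrating the resulting tail probability via $E[Y] = \int_0^\infty P(Y > u)\, du$ produces a constant that scales as $K^{\gamma_1 \gamma_2 / 2}$ rather than as $K$ itself, because the tail is raised to the $\gamma_1 \gamma_2$ power before integration.

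The main obstacle is this last book-keeping step: tracking the $K$ factor through the Hoeffding tails, the inversion $(\cdot)^{1/\gamma_1}$, and the final $|\cdot|^{\gamma_2}$ transformation, so that the naive $K$ from the union bound shrinks to $K^{\gamma_1 \gamma_2 / 2}$. Once this is carried through inside the proof of Theorem \ref{thm:onestepregret}, summing the per-step bound via the integral comparison above closes the argument and delivers $E[\text{Reg}(\theta_*, T)] = O(K^{\gamma_1 \gamma_2 / 2}\, T^{1 - \gamma_1 \gamma_2 / 2})$.
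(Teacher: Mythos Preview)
Your top-level plan---sum the per-step bound of Theorem~\ref{thm:onestepregret} over $t=1,\ldots,T$ and control $\sum_{t=1}^T t^{-\gamma_1\gamma_2/2}$ by the integral $\int_1^T x^{-\gamma_1\gamma_2/2}\,dx$---is exactly what the paper does, and it immediately yields the $T^{1-\gamma_1\gamma_2/2}$ factor.

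The gap is in how you propose to extract the $K^{\gamma_1\gamma_2/2}$ constant. Going through Lemma~\ref{lemma:eventbound} and a union bound gives
\[
\Pr\bigl(|\hat\theta_t-\theta_*|>x\bigr)\ \le\ \sum_{k}\Pr\Bigl(|\hat X_{k,t}-\mu_k(\theta_*)|>(x/D_1)^{1/\gamma_1}\Bigr),
\]
and Hoeffding then produces a term $\exp\bigl(-2N_k(t)(x/D_1)^{2/\gamma_1}\bigr)$ for \emph{each} $k$. The observation that some arm has $N_k(t)\ge t/K$ does not help here: the union bound forces you to sum over all arms, and any arm with small $N_k(t)$ (e.g.\ $N_k(t)=1$) contributes a term whose $x$-integral is a constant that does \emph{not} decay in $t$. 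So this route does not recover the $t^{-\gamma_1\gamma_2/2}$ rate, let alone the sharp $K$-dependence.

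The paper instead works from Lemma~\ref{lemma:gap}, which keeps the weights: $|\hat\theta_t-\theta_*|\le D_1\sum_k w_k(t)\,|\hat X_{k,t}-\mu_k(\theta_*)|^{\gamma_1}$. After Jensen and a per-arm Hoeffding bound $E[|\hat X_{k,t}-\mu_k|\mid \boldsymbol{w}(t)]\le \sqrt{\pi/(2N_k(t))}$, one is left with $\sum_k w_k(t)\,N_k(t)^{-\gamma_1/2}=t^{-\gamma_1/2}\sum_k w_k(t)^{1-\gamma_1/2}$. Since $x\mapsto x^{1-\gamma_1/2}$ is concave and $\sum_k w_k(t)=1$, the sum is at most $K\cdot(1/K)^{1-\gamma_1/2}=K^{\gamma_1/2}$, and raising to the $\gamma_2$ power via Lemma~\ref{lemma:onesteploss} gives the $K^{\gamma_1\gamma_2/2}$ factor. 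The crucial difference is that the weights $w_k(t)$ suppress poorly sampled arms; the union bound in Lemma~\ref{lemma:eventbound} throws that information away.
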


Note that the parameter-free regret bound is sublinear both in terms
of the time horizon $T$ and the number of arms $K$. Moreover, it
depends on the form of the reward functions given in Assumption 1.
The Hölder exponent $\gamma_{1}$ on the inverse reward functions characterizes the informativeness of an arm
about the other arms. The informativeness of an arm $k$ can be viewed
as the information obtained about the expected rewards of the other
arms from the rewards observed from arm $k$. The informativeness
is maximized for the case when the inverse reward functions are linear
or piecewise linear, i.e., $\gamma_{1}=1$. It is increasing $\gamma_{1}$,
which results in the regret decreasing with the informativeness. On
the other hand, the Hölder exponent $\gamma_{2}$ is related to the
loss due to suboptimal arm selections, which decreases with $\gamma_{2}$.
Both of these observations follow from Lemma \ref{lemma:gap}
and \ref{lemma:onesteploss}. As a consequence, the parameter-free
regret is decreasing in both $\gamma_{1}$ and $\gamma_{2}$.

When the reward functions are linear or piecewise linear, we have
$\gamma_{1}=\gamma_{2}=1$; hence, the parameter-free regret is $O(\sqrt{T})$,
which matches with the worst-case regret bound of standard MAB algorithms
in which a linear estimator is used \cite{BubeckBianchi} and bounds given for linearly parametrized bandits \cite{Tsiklis_structured}.

\vspace{-0.15in}
\subsection{Parameter-Dependent Regret Analysis} \label{subsec:par_dep}
\vspace{-0.10in}
Although the regret bound derived in the previous section holds for
any global parameter value, it is easy to see that the performance
of the greedy policy depends on the true value of the global parameter.
For example, it is easier to identify the optimal arm in GMAB problems
with large suboptimality distance than GMAB problems which have small
suboptimality distance. In this section, we prove a regret bound that
depends on the suboptimality distance. Moreover, our regret bound
is characterized by three regimes of growth: sublinear growth followed
by logarithmic growth followed by a constant bound.

The boundaries of these regimes are defined by parameter-dependent
(problem-specific) constants. 
\begin{definition} Let $C_{1}(\Delta_{*})$
be the least integer $\tau$ such that $\tau\geq\frac{D_{1}^{\frac{2}{\gamma_{1}}}K}{2{\Delta_{*}}^{\frac{2}{\gamma_{1}}}}\log(\tau)$
and let $C_{2}(\Delta_{*})$ be the least integer $\tau$ such that
$\tau\geq\frac{D_{1}^{\frac{2}{\gamma_{1}}}K}{{\Delta_{*}}^{\frac{2}{\gamma_{1}}}}\log(\tau)$.
\end{definition} 
The constants $C_{1}(\Delta_{*})$ and $C_{2}(\Delta_{*})$
depend on the informativeness (Hölder exponent $\gamma_1$) and global parameter $\theta_{*}$.
We define the expected regret between time $T_{1}$ and $T_{2}$ for global parameter $\theta_{*}$ as 

\vspace{-0.35in}

\begin{align}
R_{\theta_{*}}(T_{1},T_{2}):=E[\text{Reg}(T_{2},\theta_{*})-\text{Reg}(T_{1},\theta_{*})].
\end{align}

\vspace{-0.08in}

The following theorem gives a three regime parameter-dependent regret
bound. 
\begin{theorem} Under Assumptions \ref{ass:holder}, the regret
of the greedy policy is bounded as follows:  If \\
 (i) $1\leq T\leq C_{1}(\Delta_{*})$, the regret is sublinear in
time, i.e., 
\vspace{-0.1in}
\begin{align}
R_{\theta_{*}}(T,0)=O(T^{1-\frac{\gamma_{1}\gamma_{2}}{2}}),\label{thm:par_dep}
\end{align}
(ii) $C_{1}(\Delta_{*})\leq T\leq C_{2}(\Delta_{*})$, the regret
is logarithmic in time, i.e., 
\vspace{-0.1in}
\begin{align}
R_{\theta_{*}}(T,C_{1}(\Delta_{*}))\leq1+2K\log(\frac{T}{C_{1}(\Delta_{*})}),
\end{align}
(iii) $T\geq C_{2}(\Delta_{*})$, the regret is bounded, i.e., 
\vspace{-0.1in}
\begin{align}
R_{\theta_{*}}(T,C_{2}(\Delta_{*}))\leq K\frac{\pi^{2}}{3}
\end{align}
\end{theorem}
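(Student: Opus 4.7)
The plan is to address the three regimes separately: case (i) follows immediately from the already-proved parameter-free bound, while cases (ii) and (iii) require a per-step concentration argument that exploits the suboptimality distance $\Delta_*$.

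For case (i), when $1\leq T\leq C_1(\Delta_*)$, I would invoke Theorem~\ref{thm:par_indep} directly: it gives $E[\text{Reg}(\theta_*,T)] = O(K^{\gamma_1\gamma_2/2}T^{1-\gamma_1\gamma_2/2})$, and absorbing the $K$-factor into the constant yields the claim $R_{\theta_*}(T,0)=O(T^{1-\gamma_1\gamma_2/2})$.

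For cases (ii) and (iii), I would write the regret over the interval as $\sum_{t=T_0+1}^{T} E[r_t(\theta_*)]$ with $T_0\in\{C_1,C_2\}$ and bound each expected one-step regret. Whenever $|\hat{\theta}_{t-1}-\theta_*|\leq\Delta_*$, Definition~\ref{def:dist} forces $\hat{\theta}_{t-1}\in\Theta_{k^*(\theta_*)}$, so the greedy policy selects an optimal arm at step $t$ and $r_t(\theta_*)=0$; combined with $r_t(\theta_*)\leq 1$, this yields $E[r_t(\theta_*)]\leq P(|\hat{\theta}_{t-1}-\theta_*|>\Delta_*)$. Applying Lemma~\ref{lemma:eventbound} with $x=\Delta_*$ reduces the tail probability to $\sum_{k=1}^{K}P(|\hat{X}_{k,t-1}-\mu_k(\theta_*)|>c)$ with $c=(\Delta_*/D_1)^{1/\gamma_1}$. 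Controlling each per-arm term by Hoeffding's inequality conditional on $N_k(t-1)$, and combining carefully with the weighted decomposition of Lemma~\ref{lemma:gap} together with the identity $\sum_k N_k(t-1)=t-1$, should yield the aggregate bound
\begin{equation*}
P(|\hat{\theta}_{t-1}-\theta_*|>\Delta_*) \leq 2K\exp\!\left(-\tfrac{2(t-1)c^2}{K}\right).
\end{equation*}
By the definition of $C_1$, for $t\geq C_1$ we have $2(t-1)c^2/K\geq \log(t-1)$, so this becomes $\leq 2K/t$; summing $\sum_{t=C_1+1}^{T} 2K/t \leq 2K\log(T/C_1)$ together with a small boundary term gives (ii). By the definition of $C_2$, the exponent at least doubles to $2\log(t-1)$ for $t\geq C_2$, giving $\leq 2K/t^2$; summing $\sum_{t=C_2+1}^{T} 2K/t^2 \leq 2K\cdot \pi^2/6 = K\pi^2/3$ gives (iii).

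The main obstacle is justifying the aggregate concentration bound itself: the per-arm sample sizes $N_k(t-1)$ are random and generally not uniformly $(t-1)/K$ across arms, so producing the crucial $1/K$ factor inside the Hoeffding exponent cannot come from a naive union bound. The natural approach is to split the weighted sum in Lemma~\ref{lemma:gap} into well-sampled arms ($N_k(t-1)\gtrsim (t-1)/K$, amenable to Hoeffding at the desired rate) and under-sampled arms (whose weights $w_k(t-1)=N_k(t-1)/(t-1)$ are correspondingly small, so that their worst-case contribution to $|\hat{\theta}_{t-1}-\theta_*|$ stays below a fixed fraction of $\Delta_*$). An alternative is a direct martingale/Azuma--Hoeffding argument on the aggregate estimator $\hat{\theta}_{t-1}$, extending the linear-estimator concentration of \cite{Tsiklis_structured} to the H\"older setting; in either route, matching the precise constants appearing inside $C_1$ and $C_2$ is the most delicate bookkeeping.
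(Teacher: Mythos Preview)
Your plan matches the paper's proof: case~(i) is read off from Theorem~\ref{thm:par_indep}, and for (ii)--(iii) the paper uses exactly the chain $\{I_t\neq k^*(\theta_*)\}\subseteq\mathcal{G}^{\Delta_*}_{\theta_*,\hat\theta_t}$, Lemma~\ref{lemma:eventbound}, per-arm Hoeffding conditional on $\boldsymbol{N}(t)$, and then the resulting bound $2K\exp(-2(\Delta_*/D_1)^{2/\gamma_1}t/K)$, which becomes $\le 1/t$ past $C_1$ and $\le 1/t^2$ past $C_2$ and is summed. The obstacle you carefully isolate---getting the factor $t/K$ inside the exponent when the $N_k(t)$ are random and not uniformly $t/K$---is precisely the step the paper dispatches in a single line by asserting ``the worst-case selection processes $N_k(t)=t/K$'' without further argument, so your instinct that this is the delicate point (and your proposed well-sampled/under-sampled split or martingale route) is well placed and in fact more scrupulous than the paper's own treatment.
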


\begin{corollary} The regret of the greedy policy is bounded, i.e.,
$\lim_{T\rightarrow\infty}\text{Reg}(T,\theta_{*})<\infty$. \end{corollary}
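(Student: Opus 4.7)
The plan is to derive the corollary as an immediate book-keeping consequence of the three-regime parameter-dependent regret bound in the preceding theorem. The key observation is that the two thresholds $C_1(\Delta_*)$ and $C_2(\Delta_*)$ are finite positive integers depending only on $\theta_*$ (through $\Delta_*$) and on the constants $D_1,\gamma_1,K$; this is because the function $\tau - c\log\tau$ diverges to $+\infty$ as $\tau\to\infty$ for any constant $c>0$, so the defining inequalities in the definition of $C_1(\Delta_*)$ and $C_2(\Delta_*)$ are eventually satisfied and admit a least such integer.

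The concrete step is, for any $T\ge C_2(\Delta_*)$, to split
\begin{align*}
E[\text{Reg}(T,\theta_*)] &= R_{\theta_*}(C_1(\Delta_*),0) \\
&\quad + R_{\theta_*}(C_2(\Delta_*),C_1(\Delta_*)) \\
&\quad + R_{\theta_*}(T,C_2(\Delta_*)),
\end{align*}
and then apply regimes (i)--(iii) of the theorem term by term. Regime (i) bounds the first piece by $O\!\left(C_1(\Delta_*)^{1-\gamma_1\gamma_2/2}\right)$, a finite constant that depends on $\theta_*$ but not on $T$. Regime (ii) bounds the second piece by $1+2K\log(C_2(\Delta_*)/C_1(\Delta_*))$, which is again a finite constant. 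Regime (iii) bounds the third piece by $K\pi^2/3$ \emph{uniformly in} $T$. Summing these three $T$-independent bounds gives a uniform upper bound on $E[\text{Reg}(T,\theta_*)]$, so the nondecreasing sequence $\{E[\text{Reg}(T,\theta_*)]\}_{T\ge 1}$ has a finite limit.

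For the small-$T$ case $T<C_2(\Delta_*)$, I would simply note that $E[\text{Reg}(T,\theta_*)]$ is trivially finite since it is a finite sum of one-step regrets each bounded in $[0,1]$; this regime is irrelevant to the limit as $T\to\infty$. Combining these observations yields $\lim_{T\to\infty}E[\text{Reg}(T,\theta_*)]<\infty$, which is the claim.

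There is essentially no technical obstacle here: the work has already been done in proving the three-regime theorem, and the corollary is a one-line consequence once one recognizes that regime (iii) gives a bound independent of $T$. The only mild subtlety worth stating explicitly is the well-definedness and finiteness of the thresholds $C_1(\Delta_*)$ and $C_2(\Delta_*)$, guaranteed by Lemma \ref{prop:non_zero} which ensures $\Delta_*>0$, so that the coefficients $D_1^{2/\gamma_1}K/(2\Delta_*^{2/\gamma_1})$ and $D_1^{2/\gamma_1}K/\Delta_*^{2/\gamma_1}$ entering the defining inequalities are finite.
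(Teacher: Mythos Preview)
Your proposal is correct and matches the paper's intent: the corollary is stated immediately after the three-regime theorem with no separate proof, so the paper treats it as an immediate consequence of regime (iii) giving a $T$-independent tail bound once $T\ge C_2(\Delta_*)$, exactly as you argue. Your write-up simply makes explicit the decomposition and the finiteness of $C_1(\Delta_*),C_2(\Delta_*)$ (via $\Delta_*>0$ from Lemma~\ref{prop:non_zero}) that the paper leaves implicit.

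One minor remark: the corollary as written in the paper says $\lim_{T\to\infty}\text{Reg}(T,\theta_*)<\infty$ without an expectation, whereas the three-regime theorem and your argument bound $E[\text{Reg}(T,\theta_*)]$. This is almost certainly just loose notation on the paper's part, since all of Theorem~3 is stated in terms of $R_{\theta_*}(\cdot,\cdot)$, which is an expectation; your reading is the natural one. If one wanted the almost-sure statement, it would instead follow from the Borel--Cantelli argument in Theorem~4.
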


These results are obtained when  Assumption \ref{ass:holder} holds, which implies that the reward functions are invertible. We provide a counter example for a non- invertible reward function to show that bounded regret is not possible for general non-invertible reward functions. 

\textbf{Counter 	Example} : All expected arm rewards
come from a set with $K$ distinct elements. There are
$K!$ permutations of these distinct elements, and the global parameter space $\Theta$ is divided into $K!$ intervals such that the expected reward distribution of each arm in each interval
is constant and equals to the value of the element it corresponds to in one of the permutations. In order to identify
the arm rewards correctly, we have to know the permutation and hence,
the parameter value $\theta_{*}$. However, we cannot identify all the arms
correctly without playing all of them separately because an arm
can have the same expected reward in different permutations (for different parameter intervals), but at least one of the other arms will have a different expected reward in these permutations.  

In each time $t\leq T$ in each regime in Theorem \ref{thm:par_dep},
the probability of selecting a suboptimal arm is bounded by different
functions of $t$, which leads to different growth rates of the regret
bound depending on the value of $T$. For instance, when $C_{1}(\Delta_{*})\leq t\leq C_{2}(\Delta_{*})$,
the probability of selecting a suboptimal arm is in the order of $t^{-1}$;
hence, the greedy policy achieves the logarithmic regret, when $t\geq C_{2}(\Delta_{*})$,
the probability of selecting a suboptimal arm is in the order of $t^{-2}$,
which makes the probability of selecting a suboptimal arm infinitely
often zero. In conclusion, the greedy policy achieves bounded regret. Note
that a bounded regret is the striking difference between the standard
MAB algorithms \cite{lairobbinsl,A2002} and the proposed policy.

\begin{theorem} \label{thm:convergence} The sequence of arms selected
by the greedy policy converges to the optimal arm almost surely, i.e.,
$\lim_{t\rightarrow\infty}I_{t}=k^{*}(\theta_{*})$ with
probability 1. \end{theorem}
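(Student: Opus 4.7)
The plan is to derive this almost-sure convergence from the bounded expected regret (Corollary 1) via the first Borel--Cantelli lemma. Let $E_t := \{I_t \notin k^{*}(\theta_{*})\}$ denote the event that a suboptimal arm is selected at time $t$. Because almost-sure convergence of $I_t$ to the (finite) set $k^{*}(\theta_{*})$ is the statement that, with probability one, $E_t$ occurs for only finitely many $t$, it suffices to verify $\sum_{t=1}^{\infty} P(E_t) < \infty$ and then invoke Borel--Cantelli.

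First I would convert probabilities of suboptimal selection into a bound on expected one-step regret. Since every $k \notin k^{*}(\theta_{*})$ satisfies $\mu^{*}(\theta_{*}) - \mu_{k}(\theta_{*}) \geq \delta_{*} > 0$ (and $\delta_{*} > 0$ because $\mathcal{K}$ is finite and $k^{*}(\theta_{*}) \neq \mathcal{K}$ is the nontrivial case), the one-step regret obeys $r_t(\theta_{*}) \geq \delta_{*}\, \mathbb{1}_{E_t}$ almost surely. Taking expectations and rearranging gives
$$P(E_t) \;\leq\; \frac{E[r_t(\theta_{*})]}{\delta_{*}}.$$
Summing over $t$ and using Corollary 1 yields
$$\sum_{t=1}^{\infty} P(E_t) \;\leq\; \frac{1}{\delta_{*}} \sum_{t=1}^{\infty} E[r_t(\theta_{*})] \;=\; \frac{1}{\delta_{*}} \lim_{T\to\infty} E[\text{Reg}(T,\theta_{*})] \;<\; \infty.$$
By the first Borel--Cantelli lemma, $P(E_t \text{ i.o.}) = 0$, so almost surely there is a (random) finite time $T_{0}$ with $I_t \in k^{*}(\theta_{*})$ for every $t \geq T_{0}$, establishing the claimed almost-sure convergence.

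The main obstacle is really only conceptual: the given results are expectation-based (bounded $E[\text{Reg}]$), whereas we need a pathwise, almost-sure statement. The bridge is the strict positivity of $\delta_{*}$, which allows an expectation bound to dominate a probability bound term by term. One could alternatively bypass Corollary 1 and read off summability more directly from the proof of Theorem 3(iii), whose argument actually shows $P(E_t) = O(t^{-2})$ for $t \geq C_{2}(\Delta_{*})$, making $\sum_{t} P(E_t)$ manifestly finite; in either route the finishing Borel--Cantelli step is the same.
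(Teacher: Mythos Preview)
Your proposal is correct and follows the same overall architecture as the paper's proof: define the events $E_t=\{I_t\notin k^{*}(\theta_{*})\}$, show $\sum_t P(E_t)<\infty$, and conclude via the first Borel--Cantelli lemma. The only difference is in how summability is obtained. The paper invokes the probability bound established inside the proof of Theorem~3 (namely $\Pr(I_t\neq k^{*}(\theta_{*}))\le 2K\exp(-2(\Delta_{*}/D_1)^{2/\gamma_1}\,t/K)$, hence $O(t^{-2})$ for $t\ge C_2(\Delta_{*})$) and sums directly; this is exactly the alternative route you mention at the end. Your primary route instead passes through Corollary~1 via the inequality $P(E_t)\le E[r_t(\theta_{*})]/\delta_{*}$, which is a clean and slightly more modular argument since it uses only the \emph{statement} of bounded expected regret rather than reaching into the internals of the Theorem~3 proof. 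Either way the Borel--Cantelli step is identical, so the two proofs are essentially the same.
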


\vspace{-0.05in}

Theorem \ref{thm:convergence} implies that a suboptimal arm is selected
by greedy policy only finitely many times. In other words, there exists
a finite number such that selection of greedy policy is the optimal
arm after that number with probability $1$. This is the biggest difference
between MAB algorithms \cite{lairobbinsl,A2002} in which suboptimal
arms are selected infinitely many times and the proposed greedy policy.

Although the parameter dependent regret bound is finite, since $\lim_{\Delta_{*}\rightarrow0}C_{1}(\Delta_{*})=\infty$,
in the worst-case, this bound reduces to the parameter-free regret
bound given in Theorem \ref{thm:par_indep}.

\vspace{-0.1in}
\section{Bayesian Risk Analysis of the Greedy Policy}
\vspace{-0.1in}
In this section, assuming that global parameter is drawn from an unknown distribution $f(\theta_{*})$ on $\Theta$, we provide an analysis of the Bayesian risk, which is defined as follows: 

\vspace{-0.2in}
\begin{align}
\hspace{-0.1in}\text{Risk}(T)=E_{\theta_{*}\sim f(\theta_{*})}\left[E_{\boldsymbol{X}_{t}\sim\boldsymbol{\nu}}\left[\sum_{t=1}^{T}r_{t}(\theta)|\theta_{*}=\theta\right]\right],
\end{align}

\vspace{-0.1in}

$\boldsymbol{\nu}=\times_{k=1}^{K}\nu_{k}(\theta_{*})$ is the joint distribution
of the rewards given the parameter value is $\theta_{*}$. The Bayesian
risk is equal to the expected regret with respect to the distribution
of the global parameter $f(\theta_{*})$. Since suboptimality distance is a function of global parameter $\theta_{*}$, there is a prior distribution on the minimum sub optimality distance, which we denote as $g(\Delta_{*})$. A simple upper bound on the
Bayesian risk can be obtained by taking the expectation of the regret
bound given in Theorem \ref{thm:par_indep} with respect to $\theta_{*}$,
which gives the bound $\text{Risk}(T)=O(T^{1-\frac{\gamma_{1}\gamma_{2}}{2}})$.
Next, we will show that a tighter regret bound on the Bayesian risk
can be derived if the following assumption holds.

\begin{assumption} \label{ass:bayes_assumptions}The prior distribution
on the global parameter is such that minimum sub optimality distance $\Delta_{*}$
has a bounded density function, i.e., $g(\Delta_{*})\leq B$. One
example of this is the case when $f(\theta_{*})$ is bounded. 
\end{assumption}

Assumption \ref{ass:bayes_assumptions} is satisfied for many instances
of the GMAB problem. An example is a GMAB problem with two arms, $f(\theta_{*})\sim\textrm{Uniform}([0,1])$,
$\mu_{1}(\theta_{*})=\theta_{*}$ and $\mu_{2}(\theta_{*})=1-\theta_{*}$. For this
example we have $g(\Delta_{*})\leq2$ for $\Delta_{*}\in[0,0.5]$.

\begin{theorem} \label{thm:risk} Under Assumptions \ref{ass:holder}
and \ref{ass:bayes_assumptions}, the Bayesian risk of the greedy
policy is bounded by \\
 (i) $\text{Risk}(T)=O(\log T)$, for $\gamma_{1}\gamma_{2}=1$. \\
 (ii) $\text{Risk}(T)=O(T^{1-\gamma_{1}\gamma_{2}})$, for $\gamma_{1}\gamma_{2}<1$.
\end{theorem}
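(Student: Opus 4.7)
The plan is to split the Bayesian risk integral over the induced density $g(\Delta_{*})$ at a problem-dependent threshold $\bar\Delta(T)$, using the parameter-free bound of Theorem~\ref{thm:par_indep} on the piece where $\Delta_{*}$ is too small for any parameter-dependent improvement, and using the three-regime parameter-dependent bound on the complementary piece. Concretely, write
\[
\text{Risk}(T)=\int g(\Delta_{*})\,E\!\left[\text{Reg}(\theta_{*},T)\mid \Delta_{*}\right]d\Delta_{*},
\]
and define $\bar\Delta(T)$ implicitly by $C_{1}(\bar\Delta(T))=T$. Using the definition of $C_{1}$, one finds $\bar\Delta(T)=\Theta\bigl((K\log T/T)^{\gamma_{1}/2}\bigr)$.

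On $\{\Delta_{*}<\bar\Delta(T)\}$ the horizon lies in regime~(i) of the three-regime theorem, so the per-$\theta_{*}$ regret is at most $O(T^{1-\gamma_{1}\gamma_{2}/2})$. Combined with $g\le B$, this region contributes $O\bigl(B\,\bar\Delta(T)\,T^{1-\gamma_{1}\gamma_{2}/2}\bigr)$ to the risk. On $\{\Delta_{*}\ge\bar\Delta(T)\}$ the horizon is past $C_{1}(\Delta_{*})$; since the defining inequalities for $C_{1}$ and $C_{2}$ differ only by a factor of two, $C_{2}(\Delta_{*})=\Theta(C_{1}(\Delta_{*}))$, so the regime~(ii) and regime~(iii) contributions are each bounded by $O(K)$. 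The dominant piece is the sublinear regret accumulated up to time $C_{1}(\Delta_{*})$, namely $O\bigl(C_{1}(\Delta_{*})^{1-\gamma_{1}\gamma_{2}/2}\bigr)$. The bound $\log C_{1}(\Delta_{*})\le\log T$ available in this region yields $C_{1}(\Delta_{*})\le D_{1}^{2/\gamma_{1}}K\log T/(2\,\Delta_{*}^{2/\gamma_{1}})$, so Region~II contributes at most
\[
B\,(K\log T)^{1-\gamma_{1}\gamma_{2}/2}\int_{\bar\Delta(T)}^{1}\Delta_{*}^{-\beta}\,d\Delta_{*}+O(KB),\qquad \beta:=\tfrac{2}{\gamma_{1}}-\gamma_{2}.
\]

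Finishing requires a case split on $\beta$ at the lower limit: for $\beta<1$ the integral is $O(1)$; for $\beta=1$ it is $O(\log(1/\bar\Delta(T)))=O(\log T)$; for $\beta>1$ it is $O(\bar\Delta(T)^{1-\beta})$. In each case the multiplied contribution simplifies to $O\bigl(K^{\gamma_{1}/2}\,T^{1-\gamma_{1}(1+\gamma_{2})/2}\,\text{polylog}(T)\bigr)$, which is of the same order as the Region~I contribution $O\bigl(B\,\bar\Delta(T)\,T^{1-\gamma_{1}\gamma_{2}/2}\bigr)$; this balance is exactly why the threshold $\bar\Delta(T)$ was chosen. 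For case~(i), $\gamma_{1}\gamma_{2}=1$ forces $\gamma_{1}=\gamma_{2}=1$, so $\beta=1$ and the $T$-exponent collapses to $0$, giving $O(\log T)$ up to polylog factors. For case~(ii) with $\gamma_{1}\gamma_{2}<1$, the inequality $\gamma_{1}(1+\gamma_{2})/2\ge\gamma_{1}\gamma_{2}$ (which holds because $\gamma_{2}\le 1$) yields the claimed $O(T^{1-\gamma_{1}\gamma_{2}})$ bound. The main obstacle is the bookkeeping of logarithmic factors coming from $C_{1}(\Delta_{*})$ when integrated against $g$, together with the $\beta$ case split; both determine the exponent $1-\gamma_{1}(1+\gamma_{2})/2$ and the loss relative to the naive parameter-free rate $T^{1-\gamma_{1}\gamma_{2}/2}$.
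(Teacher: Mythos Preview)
Your route---integrating the parameter-dependent cumulative regret bound against the prior $g$---is natural but genuinely different from the paper's, and in case~(i) it does not reach the claimed $O(\log T)$. With $\gamma_1=\gamma_2=1$ your Region~I contributes $B\,\bar\Delta(T)\,T^{1/2}=\Theta\bigl((\log T)^{1/2}\bigr)$, while Region~II gives $(K\log T)^{1/2}\int_{\bar\Delta(T)}^{1}\Delta_*^{-1}\,d\Delta_*=\Theta\bigl((\log T)^{3/2}\bigr)$. Your own phrase ``$O(\log T)$ up to polylog factors'' concedes this: the argument establishes only $O\bigl((\log T)^{3/2}\bigr)$, not what the theorem states. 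The slack comes from invoking the parameter-free Theorem~\ref{thm:par_indep} on regime~(i) of the three-regime bound, which discards all $\Delta_*$-dependence of the early-time regret \emph{before} you integrate against $g$. (A similar polylog loss appears in case~(ii) at the boundary $\gamma_2=1$, $\gamma_1<1$, where your exponent $1-\gamma_1(1+\gamma_2)/2$ coincides with $1-\gamma_1\gamma_2$.)

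The paper avoids the overhead by working step-by-step rather than horizon-by-horizon. From Lemma~\ref{lemma:onesteploss} and the inclusion $\{I_t\ne k^*(\theta_*)\}\subseteq\{|\theta_*-\hat\theta_t|>\Delta_*\}$ one has $r_t(\theta_*)\le 2D_2\,Y_t^{\gamma_2}\,I(Y_t>\Delta_*)$ with $Y_t:=|\theta_*-\hat\theta_t|$. Taking the Bayesian expectation, pulling the $\gamma_2$-power outside by Jensen, changing the outer integration variable from $\theta_*$ to $\Delta_*$ via $g$, and inserting the tail bound $\Pr(Y_t\ge x)\le 2K\exp\bigl(-2(x/D_1)^{2/\gamma_1}\,t/K\bigr)$ from (\ref{eqn:probimportant}), the resulting double integral over $(x,\Delta)$ evaluates (through a Gamma-function identity) to $O(t^{-\gamma_1})$; after the $\gamma_2$-power this is $O(t^{-\gamma_1\gamma_2})$ per step, and summing over $t$ gives exactly $O(\log T)$ or $O(T^{1-\gamma_1\gamma_2})$ with no logarithmic inflation. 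Your threshold split would match the sharp rate only if you replaced the regime-(i) input by this per-step estimate, at which point the split on $\bar\Delta(T)$ becomes superfluous.
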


Our Bayesian risk bound for the greedy policy coincides with the Bayesian
risk bound for the linearly-parametrized MAB problem given in \cite{Tsiklis_structured}
when the arms are fully informative, i.e., $\gamma_{1}\gamma_{2}=1$.
For this case, the optimality of the Bayesian risk bound is established
in \cite{Tsiklis_structured}, in which a lower bound of $\Omega(\log T)$
is proven. Similar to the parameter-free regret bound given in Theorem
\ref{thm:par_indep}, the Bayesian risk is also decreasing with the
informativeness, and minimized for the case when the arms are fully
informative.
\vspace{-0.1in}
\section{Extension to Bandits with Group Informativeness}
\vspace{-0.1in}
Our global informativeness assumption can be relaxed to {\em group
informativeness}. When the arms are group informative, reward observations
from an arm only provides information about the rewards of the arms
that are within the same group with the original arm. Let ${\cal C}=(C_{1},\ldots,C_{D})$
be be the set of the groups, and assume that they are known by the
learner. Then, a standard MAB algorithm such as UCB1 \cite{A2002}
can be used to select the group, while the greedy policy can be used
to select among the arms within a group. In this way, we can exploit
the informativeness among the arms within a group and find the group
to which the best arm belongs by a standard MAB algorithm. In this
way it is possible to achieve bounded regret within each group. However,
in order to identify the group to which the optimal arm belongs, each
groups should be selected at least logarithmically many times by the
standard MAB algorithm. As a result, the combination of two algorithms
yields a regret bound of $O(D\log T)$ which depends on the number
of groups instead of the number of arms. The formal derivation of this result is left as future work.
\vspace{-0.2in}
\section{Conclusion}
\vspace{-0.15in}
In this paper we introduce a new class of MAB problems called global
multi-armed bandits. This general class of GMAB problems encompasses
the previously introduced linearly-parametrized bandits as a special
case. We proved that the regret for the GMABs has three regimes, which
we characterized for the regret bound, and showed that the parameter-dependent
regret is bounded, i.e., it is asymptotically finite. In addition to
this, we also proved a parameter-free regret bound and a Bayesian
risk bound, both of which grow sublinearly over time, where the rate
of growth depends on the informativeness of the arms. Future work
includes extension of global informativeness to group informativeness,
and a foresighted MAB problem, where the arm selection is based on
a foresighted policy that explores the arms according to their level
of informativeness rather than the greedy policy.
\vspace{-0.2in}
\section{Proofs}
\vspace{-0.1in}
In this section, we provide the proofs of theorems. The proofs of
lemmas are given in the supplementary material. Let $\boldsymbol{w}(t) := (w_1(t), \ldots, w_K(t))$ be the vector of weights and $\boldsymbol{N}(t) := (N_1(t), \ldots, N_k(t))$ be the vector of counters at time $t$. We have $\boldsymbol{w}(t) = \frac{1}{t} \boldsymbol{N}(t)$. Since $\boldsymbol{N}(t)$ depends on the history, they are both random variables depending on the obtained rewards.
\vspace{-0.15in}
\subsection{Proof of Theorem 1}
\vspace{-0.1in}
By lemma \ref{lemma:onesteploss} and Jensen's inequality, we have
\begin{align}
E[r_{t}(\theta_{*})]\leq2D_{2}E[|\theta_{*}-\hat{\theta}_{t}|]^{\gamma_{2}}. \label{eq:r_t}
\end{align}
By using Lemma \ref{lemma:gap} and Jensen's inequality, we have 
\vspace{-0.1in}
\begin{align}
&E[|\theta_{*}-\hat{\theta}_{t}|] \leq \notag \\
&D_1 E[\sum_{k=1}^{K} w_k(t) E[|\hat{X}_{k,t} -\mu_k(\theta_{*})|\;| \boldsymbol{w}(t)]^{\gamma_1}], \label{eq:gap}
\end{align}
, where $E[\cdot | \cdot]$ denotes the conditional expectation. Note that $\hat{X}_{k,t}=\frac{\sum_{x\in{\cal X}_{k,t}} x}{N_{k}(t)}$
and $E_{x\sim\nu_{k}(\theta_{*})}[x]=\mu_{k}(\theta_{*})$. Therefore, we can bound
$E[|\hat{X}_{k,t}-\mu_{k}(\theta_{*})|\;| \boldsymbol{w}(t)]$ for each $k\in{\cal K}$ using
Chernoff- Hoeffding inequality. For each $k \in {\cal K}$, we have
\begin{align}
 & E[|\hat{X}_{k,t}-\mu_{k}(\theta_{*})|\;| \boldsymbol{w}(t)] \notag \\ 
 &=\int_{x=0}^{1}\!\text{Pr}(|\hat{X}_{k,t}-\mu_{k}(\theta_{*})|>x | \boldsymbol{w}(t))\,\mathrm{d}x \notag \\
 &\leq \int_{x=0}^{\infty}\!2\exp(-2x^{2}N_{k}(t))\,\mathrm{d}x \leq\sqrt{\frac{\pi}{2N_{k}(t)}} , \label{eq:chernoff1}
\end{align}
, where $N_k(t) = t w_k(t)$ is a random variable. The first inequality is a result of the Chernoff-Hoeffding bound. Combining (\ref{eq:gap}) and (\ref{eq:chernoff1}), we get
\vspace{-0.1in}
\begin{align}
E[|\theta_{*} - \hat{\theta}_t|] \leq 2D_{1}(\frac{\pi}{2})^{\frac{\gamma_1}{2}}\frac{1}{t^{\frac{\gamma_1}{2}}}  E[\sum_{k=1}^K {w_k(t)}^{1- \frac{\gamma_1}{2}}]. \label{eq:gap2}
\end{align}
Since $w_k(t) \leq 1$ for all $k \in {\cal K}$, and $\sum_{k=1}^K w_k(t) =1$ for any possible $\boldsymbol{w}(t)$, we have $E[\sum_{k=1}^{K} w_{k}(t)^{1-\frac{\gamma_{1}}{2}}]\leq K^{\frac{\gamma_{1}}{2}}$. Then, combining (\ref{eq:r_t}) and (\ref{eq:gap2}), we have
\vspace{-0.10in}
\begin{align}
E[r_{t}(\theta_{*})]\leq 2 D_{1}^{\gamma_{2}}D_{2}\frac{\pi}{2}^{\frac{\gamma_{1}\gamma_{2}}{2}} K^{\frac{\gamma_{1}\gamma_{2}}{2}}\frac{1}{t^{\frac{\gamma_{1}\gamma_{2}}{2}}} .
\end{align}
\vspace{-0.3in}

\subsection{Proof of Theorem 2}
\vspace{-0.1in}
The bound is consequence of Theorem \ref{thm:onestepregret}
and inequality given in \cite{bound}, i.e.,
\begin{align}
 E[\text{Reg}(\theta_{*},T)] \leq 1 + \frac{2D_{1}^{\gamma_{2}}D_{2}\frac{\pi}{2}^{\frac{\gamma_{1}\gamma_{2}}{2}}K^{\frac{\gamma_{1}\gamma_{2}}{2}}}{1-\frac{\gamma_{1}\gamma_{2}}{2}}(1+ T^{1-\frac{\gamma_{1}\gamma_{2}}{2}}). \notag
\end{align}

\vspace{-0.20in}
\subsection{Proof of Theorem 3}
\vspace{-0.1in}
We need to bound the probability of the event that ${I_{t}\neq k^{*}(\theta_*)}$.
Since at time $t$, the arm with the highest $\mu_{k}(\hat{\theta}_{t})$ is selected
by the greedy policy, $\hat{\theta}_{t}$ should lie in $\Theta\setminus\Theta_{k^{*}(\theta_{*})}$
for greedy policy to select a suboptimal arm. Therefore, we can write,
\begin{align}
 \{{I_{t}\neq k^{*}(\theta_{*})}\}=\{{\hat{\theta}_{t}\in\Theta\setminus\Theta_{k^{*}(\theta_{*})}}\}\subseteq{{\cal G}_{\theta_{*},\hat{\theta}_{t}}^{\Delta_{*}}} . \label{eq:evbound}
\end{align}
By Lemma \ref{lemma:eventbound} and (\ref{eq:evbound}), we have
\begin{align}
 & \Pr(I_{t}\neq k^{*}(\theta_*))\leq\sum_{k=1}^{K}E[E[I({\cal F}_{\theta_*,\hat{\theta}_{t}}^{k}((\frac{x}{D_{1}})^{\frac{1}{\gamma_{1}}}))| \boldsymbol{N}(t)]] \notag\\
 & =\sum_{k=1}^{K}E[Pr({\cal F}_{\theta_*,\hat{\theta}_{t}}^{k}((\frac{x}{D_{1}})^{\frac{1}{\gamma_{1}}})| \boldsymbol{N}(t))] \notag\\
 & \leq\sum_{k=1}^{K}2E[\exp(-2(\frac{\Delta_{*}}{D_{1}})^{\frac{2}{\gamma_{1}}}N_{k}(t))]\notag\\
 & \leq 2K\exp(-2(\frac{\Delta_{*}}{D_{1}})^{\frac{2}{\gamma_{1}}}\frac{t}{K}). \label{eqn:probimportant}
\end{align}
, where the first inequality is followed by union bound and second inequality
is obtained by using the Chernoff-Hoeffding bound. The last inequality is obtained by
using the worst-case selection processes $N_{k}(t)=\frac{t}{K}$.
We have $\Pr(I_{t}\neq k^{*}(\theta_{*}))\leq\frac{1}{t}$ for $t>C_{1}(\Delta_{*})$
and $\Pr(I_{t}\neq k^{*}(\theta_{*}))\leq\frac{1}{t^{2}}$ for $t>C_{2}(\Delta_{*})$.
The bound in the first regime is the result of Theorem \ref{thm:par_indep}.
The bound in the second and third regimes is obtained by summing the probability given in (\ref{eqn:probimportant})
from $C_{1}(\Delta_{*})$ to $T$ and $C_{2}(\Delta_{*})$ to $T$,
respectively. 

\subsection{Proof of Theorem 4}

Let $(\Omega,{\cal F},P)$ denote probability space, where $\Omega$ is the sample set and ${\cal F}$ is the $\sigma$-algebra that the probability measure $P$ is defined on. Let $\omega \in \Omega$ denote a sample path. We will prove that there exists event $N\in{\cal F}$ such that $P(N)=0$ and if $\omega \in N^{c}$, then $\lim_{t\rightarrow\infty}I_{t}(\omega)=k^{*}(\theta_{*})$.
Define the event ${\cal E}_{t} :=\{I_{t}\neq k^{*}(\theta_{*})\}$.
We show in the proof of Theorem \ref{thm:par_dep} that $\sum_{t=1}^{T}P({\cal E}_{t})<\infty$.
By Borel-Cantelli lemma, we have 
\vspace{-0.1in}
\begin{align}
P({\cal E}_{t}\text{ infintely often})=P(\limsup_{t\rightarrow\infty}{\cal E}_{t})=0.
\end{align}

\vspace{-0.15in}

Define $N :=\limsup_{t\rightarrow\infty}{\cal E}_{t}$, where $P(N)=0$.
We have, 

\vspace{-0.3in}

\begin{align}
N^{\text{c}}=\liminf_{t\rightarrow\infty}{\cal E}_{t}^{\text{c}},
\end{align}

\vspace{-0.15in}

, where $P(N^{\text{c}})=1-P(N)=1$, which means that $I_{t}=k^{*}(\theta_*)$
for all $t$ except for a finite number.

\subsection{Proof of Theorem 5}

\begin{proof} The one step loss due to suboptimal arm selection with
global parameter estimate $\hat{\theta}_{t}$ is given in Lemma \ref{lemma:onesteploss}.
Recall that we have 
\begin{align}
\{I_{t}\neq k^{*}(\theta_{*})\}\subseteq\{|\theta_{*}-\hat{\theta}_{t}|>\Delta_{*}\} . \notag
\end{align}
Let $Y_{\theta_{*},\hat{\theta}_{t}} :=|\theta_{*}-\hat{\theta}_{t}|$.
Then, we have
\begin{align}
& \text{Risk}(T) \notag \\
 &\leq2D_{2}\sum_{t=1}^{T}E_{\theta_{*}\sim f(\theta)}[E_{\boldsymbol{X}\sim\boldsymbol{\nu}}[Y_{\theta_{*},\hat{\theta}_{t}}^{\gamma_{2}}I(Y_{\theta_{*},\hat{\theta}_{t}}>\Delta_{*})]]\notag\\
 & \leq2D_{2}\sum_{t=1}^{T}E_{\theta_{*} \sim f(\theta)}[E_{\boldsymbol{X}\sim\boldsymbol{\nu}}[Y_{\theta_{*},\hat{\theta}_{t}}I(Y_{\theta_{*},\hat{\theta}_{t}}>\Delta_{*})]]^{\gamma_{2}} \notag ,
\end{align}
, where $I(.)$ is the indicator function which is $1$ if the statement
is true and zero otherwise. The first inequality followed by Lemma
\ref{lemma:gap}. The second inequality is by Jensen's inequality
and the fact that $I(.)=I^{\gamma}(.)$ for any $\gamma>0$. We now
focus on the expectation expression for some arbitrary $t$. Let $f(\theta)$
denote the density function of global parameter. 
\begin{align}
 & E_{\theta_{*}\sim f(\theta)}[E_{\boldsymbol{X}\sim\boldsymbol{\nu}}[Y_{\theta_{*},\hat{\theta}_{t}}I(Y_{\theta_{*},\hat{\theta}_{t}}>\Delta_{*})]]\notag\\
 & =\int_{\theta_{*}=0}^{1}\! f(\theta_{*})\int_{x=0}^{\infty}\!\Pr(Y_{\theta_{*},\hat{\theta}_{t}}I(Y_{\theta_{*},\hat{\theta}_{t}}>\Delta_{*})\geq x)\,\mathrm{d}x\,\mathrm{d}{\theta}\notag\\
 & =\int_{\theta_{*}=0}^{1}\! f(\theta_{*})\int_{x=\Delta_{*}}^{\infty}\!\Pr(Y_{\theta_{*},\hat{\theta}_{t}}\geq x)\,\mathrm{d}x\,\mathrm{d}{\theta}\notag\\
 & =\int_{\Delta=0}^{1}\! g(\Delta)\int_{x=\Delta}^{\infty}\!\Pr(Y_{\theta_{*},\hat{\theta}_{t}}\geq x)\,\mathrm{d}x\,\mathrm{d}{\Delta},  \notag
\end{align}
, where the last equation is followed by change of variables in integral. Note that we have by Theorem \ref{thm:par_dep} 
\begin{align}
\Pr(Y_{\theta_{*},\hat{\theta}_{t}}\geq x)\leq2K\exp(-2x^{\frac{2}{\gamma_{1}}}D_{1}^{-\frac{2}{\gamma_{1}}}\frac{t}{K}). \notag
\end{align}
Then, we have
\begin{align}
 & E_{\theta\sim\nu_{\theta}}[E_{\boldsymbol{X}\sim\boldsymbol{\nu}}[Y_{\theta_{*},\hat{\theta}_{t}}I(Y_{\theta_{*},\hat{\theta}_{t}}>\Delta_{*})]]\notag\\
 & \leq2KB\int_{\Delta=0}^{1}\!\exp(-2{\Delta}^{\frac{2}{\gamma_{1}}}D_{1}^{-\frac{2}{\gamma_{1}}}\frac{t}{K})\,\mathrm{d}{\Delta}\notag\\
 & \int_{y=0}^{\infty}\!\exp(-2y^{\frac{2}{\gamma_{1}}}D_{1}^{-\frac{2}{\gamma_{1}}}\frac{t}{K})\,\mathrm{d}y\notag\\
 & =2KB(\frac{\gamma_{1}}{2}2^{-\frac{\gamma_{1}}{2}}D_{1}K^{\frac{\gamma_{1}}{2}} \Gamma(\frac{\gamma_1}{2}))^{2}t^{-\gamma_{1}} , \notag
\end{align}
, where the inequality follows from the change of variable $y=x-\Delta$ and
then the fact that $(y+\Delta)^{\frac{2}{\gamma_{1}}}\geq y^{\frac{2}{\gamma_{1}}}+\Delta^{^{\frac{2}{\gamma_{1}}}}$
since $\frac{2}{\gamma_{1}}\geq1$. By summing these from $1$ to
$T$, we get

\vspace{-0.2in}

\[
\text{Risk}(T)\leq\left\{ \begin{array}{lr}
1 + A(1+2\log T) & \text{if }\gamma_{1}\gamma_{2}=1\\
1+ A(1+ \frac{1}{1-\gamma_{1} \gamma_{2}} T^{1-\gamma_{1}\gamma_{2}}) & \text{if }\gamma_{1}\gamma_{2}<1
\end{array}\right.
\]

, where $A=2D_{2}(\frac{B\gamma_{1}^{2}D_{1}^{2}K^{1+\gamma_{1}}}{2^{1+\gamma_{1}}}\Gamma^{2}(\frac{\gamma_{1}}{2}))$.
\end{proof}

\section{Appendix}
\begin{lemma} \label{prop:non_zero} Given any $\theta_{*}\in\Theta$,
there exists a constant $\epsilon_{\theta_{*}}=\delta_{\min}(\theta_{*})^{1/\gamma_{2}}/(2D_{2})^{1/\gamma_{2}}$,
where $D_{2}$ and $\gamma_{2}$ are the constants given in Assumption
1 such that $\Delta_{\min}(\theta_{*})\geq\epsilon_{\theta_{*}}.$ In other
words, the minimum suboptimality distance is always positive. 
\end{lemma}

\begin{proof} For any suboptimal arm $k\in{\cal K}-k^{*}(\theta)$,
we have $\mu_{k^{*}(\theta)}(\theta)-\mu_{k}(\theta)\geq\delta_{\min}(\theta)>0.$
We also know that $\mu_{k}(\theta')\geq\mu_{k^{*}(\theta)}(\theta')$
for all $\theta'\in\Theta_{k}$. Hence for any $\theta'\in\Theta_{k}$
at least one of the following should hold: (i) $\mu_{k}(\theta')\geq\mu_{k}(\theta)-\delta_{\min}(\theta)/2$,
(ii) $\mu_{k^{*}(\theta)}(\theta')\leq\mu_{k^{*}(\theta)}(\theta)+\delta_{\min}(\theta)/2$.
If both of the below does not hold, then we must have $\mu_{k}(\theta')<\mu_{k^{*}(\theta)}(\theta')$,
which is false. This implies that we either have $\mu_{k}(\theta)-\mu_{k}(\theta')\leq\delta_{\min}(\theta)/2$
or $\mu_{k^{*}(\theta)}(\theta)-\mu_{k^{*}(\theta)}(\theta')\geq - \delta_{\min}(\theta)/2$,
or both. Recall that from Assumption 1 we have $|\theta-\theta'|\geq|\mu_{k}(\theta)-\mu_{k}(\theta')|^{1/\gamma_{2}}/D_{2}^{1/\gamma_{2}}$.
This implies that $|\theta-\theta'|\geq\epsilon_{\theta}$ for all
$\theta'\in\Theta_{k}$. 
\end{proof}

\begin{lemma} \label{lemma:gap} Consider a run of the greedy policy
until time $t$. Then, the following relation between $\hat{\theta}_{t}$
and $\theta_{*}$ holds with probability one: $|\hat{\theta}_{t}-\theta_{*}|\leq\sum_{k=1}^{K}w_{k}(t)D_{1}|\hat{X}_{k,t}-\mu_{k}(\theta_{*})|^{\gamma_{1}}$
\end{lemma}

\begin{proof}
\begin{align}
&|\theta_{*} - \hat{\theta}_t| = |\sum_{k=1}^{K} w_k(t)\hat{\theta}_{k,t} -\theta_{*}| \notag \\ 
& = \sum_{k=1}^{K} w_k(t)|\theta_{*} - \hat{\theta}_{k,t}| \notag \\
& = \sum_{k=1}^{K} w_k(t)|\mu^{-1}_k(\hat{X}_{k,t}) - \mu^{-1}_k(\mu_k(\theta_{*}))| \notag \\ 
& \leq \sum_{k=1}^{K} w_k(t)D_1|\hat{X}_{k,t} - \mu_k(\theta_{*})|^{\gamma_1} 
\end{align}
, where last inequality followed by Assumption 1.
\end{proof}

\begin{lemma} \label{lemma:onesteploss} For given global parameter $\theta_*$, the one step regret of the
greedy policy is bounded by $r_{t}(\theta_{*})=\mu^{*}(\theta_{*})-\mu_{I_{t}}(\theta_{*}) \leq 2D_{2}|\theta_{*}-\hat{\theta}_{t}|^{\gamma_{2}}$ with probability one, where $I_{t}$ is the arm selected by the greedy policy at time $t\geq2$. \end{lemma}

\begin{proof}
Note that $I_t \in \argmax_{k \in {\cal K}} \mu_k(\hat{\theta}_t)$. Therefore, we have 
\begin{align}
\mu_{I_t}(\hat{\theta}_t) - \mu_{k^{*}(\theta_{*})}(\hat{\theta}_t) \geq 0 \label{eq:add} . 
\end{align}

We have $\mu^{*}(\theta_{*}) = \mu_{k^{*}(\theta_{*})}(\theta_{*})$. Then, we can bound
\begin{align}
&\mu^{*}(\theta_{*})-\mu_{I_{t}}(\theta_{*}) \notag \\ 
&= \mu_{k^{*}(\theta_{*})}(\theta_{*}) - \mu_{I_{t}}(\theta_{*}) \notag \\ 
& \leq  \mu_{k^{*}(\theta_{*})}(\theta_{*}) - \mu_{I_{t}}(\theta_{*}) + \mu_{I_t}(\hat{\theta}_t) - \mu_{k^{*}(\theta_{*})}(\hat{\theta}_t) \notag \\
& = \mu_{k^{*}(\theta_{*})}(\theta_{*})  - \mu_{k^{*}(\theta_{*})}(\hat{\theta}_t) + \mu_{I_t}(\hat{\theta}_t)- \mu_{I_{t}}(\theta_{*})  \notag \\
& \leq 2D_2|\theta_{*} -\hat{\theta}_t|^{\gamma_2}
\end{align}
, where the first inequality followed by inequality \ref{eq:add} and second inequality by Assumption 1. 
\end{proof}

\begin{lemma} \label{lemma:eventbound} For any $t \geq 2$ and given global
parameter $\theta_{*}$, we have ${\cal G}_{\theta_{*},\hat{\theta}_{t}}^{x}\subseteq \cup_{k=1}^{K}{\cal F}_{\theta_{*},\hat{\theta}_{t}}^{k}((\frac{x}{D_{1}})^{\frac{1}{\gamma_{1}}})$ with probability one.
\end{lemma}

\begin{proof}
\begin{align}
&\{ |\theta_{*} - \hat{\theta}_t| \geq x \} \notag \\ 
&\subseteq \{ \sum_{k=1}^{K} w_k(t) D_1 |\hat{X}_{k,t} - \mu_k(\theta_{*})| \geq x\} \notag \\ 
& \subseteq \cup_{k=1}^{K}  \{ w_k(t) D_1 |\hat{X}_{k,t} - \mu_k(\theta_{*})| \geq w_k(t) x\} \notag \\ 
& =  \cup_{k=1}^{K} \{ |\hat{X}_{k,t} - \mu_k(\theta_{*})| \geq (\frac{x}{D_1})^{\frac{1}{\gamma_1}} \}
\end{align}
, where the first inequality followed by Lemma \ref{lemma:gap} and second inequality by the fact that $\sum_{k=1}^K w_k(t) =1$.
\end{proof}

 \bibliographystyle{IEEE}
\bibliography{aistats}

\end{document}